\newcommand{\R}{{\mathbb R}}
\newcommand{\Z}{{\mathbb Z}}
\newcommand{\N}{{\mathbb N}}
\newcommand{\tth}{^{\text{th}}}
\newcommand{\sdots}{\,..\,}
\newcommand{\defeq}{\vcentcolon=}
\newcommand{\zf}{\mathbf{0}}
\newcommand{\cp}{CartPole\xspace}
\newcommand{\re}{Reacher\xspace}
\newcommand{\masked}{\textsc{Masked}\xspace}
\newcommand{\bc}{\textsc{BcVanilla}\xspace}
\newcommand{\bcd}{\textsc{BcManual}\xspace}
\newcommand{\net}{f_{\theta}}
\newcommand{\compnet}{g_{\theta}}
\newcommand{\enc}{\psi_e}
\newcommand{\dec}{\psi_d}
\newcommand{\vaemask}{\tilde \psi}
\newcommandtwoopt{\St}[2][t][]{{S_{#1}^{#2}}} 
\newcommandtwoopt{\st}[2][t][]{{s_{#1}^{#2}}}
\renewcommandtwoopt{\Im}[2][t][]{{I_{#1}^{#2}}} 
\newcommandtwoopt{\im}[2][t][]{{\scriptstyle I}_{#1}^{#2}}
\newcommandtwoopt{\Ob}[2][t][]{{O_{#1}^{#2}}} 
\newcommandtwoopt{\ob}[2][t][]{o_{#1}^{#2}}
\newcommandtwoopt{\Ac}[2][t][]{{A_{#1}^{#2}}} 
\newcommandtwoopt{\ac}[2][t][]{{a_{#1}^{#2}}}
\newcommand{\Hi}[1][t]{{W}} 
\newcommand{\hi}[1][t]{{w}}
\newcommand{\traj}{{^{i}\bm{\tau}}}
\newcommand{\trajs}{{^{(1..N)}\bm{\tau}}}
\newcommand{\stspace}{{\mathcal{S}}}
\newcommand{\imspace}{{\mathcal{I}}}
\newcommand{\obspace}{{\mathcal{O}}}
\newcommand{\acspace}{{\mathcal{A}}}
\newcommand{\stdim}{{d_{\stspace}}}
\newcommand{\imdim}{{d_{\imspace}}}
\newcommand{\obdim}{{d_{\obspace}}}
\newcommand{\acdim}{{d_{\acspace}}}
\newcommand{\si}{\mathscr{s}}
\newcommand{\oi}{\mathscr{o}}
\newcommand{\ai}{\mathscr{a}}
\newcommand{\Dfull}{{^{(t,t')} D_{\si,\ai}^{\oi}}}
\newcommand{\mask}{m} 
\newcommand{\maski}{\widetilde{m}} 
\newcommand{\Pd}[1]{P\big(#1\big)}  
\newcommand{\Pdi}[1]{P(#1)} 
\newcommand{\Pnorm}[1]{\Big\| #1 \Big\|_1} 
\newcommand{\Pnormi}[1]{\| #1 \|_1}  
\newcommand{\Pdint}[1]{\tilde{P}(#1)} 
\newcommand{\cU}{\mathcal U} 
\newcommand{\doc}{{\mathrm{do}}} 
\newcommand{\cau}{\rightarrow} 
\newcommand{\pcau}{\dashrightarrow} 
\newcommand{\npcau}{\not\dashrightarrow} 
\newcommand{\PI}{\mathrel{\perp\mspace{-10mu}\perp}} 
\newcommand{\nPI}{\centernot{\PI}} 
\newcommand{\G}{\mathcal{G}} 
\newcommand{\Gi}{\widetilde{\mathcal{G}}} 
\newcommand{\M}{\mathcal{M}} 
\newcommand{\Mi}{\widetilde{\mathcal{M}}} 
\newcommand{\Gs}{\mathcal{G}_s} 
\newcommand{\Gsi}{\widetilde{\mathcal{G}}_s} 
\newcommand{\Ms}{\mathcal{M}_s} 
\newcommand{\Msi}{\widetilde{\mathcal{M}}_s} 
\newcommand{\sys}{\langle \Ms, \Gs \rangle}
\newcommand{\sysi}{\langle \Msi, \Gsi \rangle}
\newcommand{\pa}[1]{\mathbf{pa}_{#1}} 
\newcommand{\bV}{\mathbf V} 
\newcommand{\bv}{\mathbf v} 
\newcommand{\bU}{\mathbf U} 
\newcommand{\cF}{\mathcal F} 
\newcommand{\SCM}{\langle \bV, \bU, \cF \rangle} 
\newcommand{\bX}{\mathbf X} 
\newcommand{\bx}{\mathbf x} 
\newcommand{\bY}{\mathbf Y} 
\newcommand{\by}{\mathbf y} 
\newcommand{\bZ}{\mathbf Z} 
\newcommand{\bz}{\mathbf z} 
\newcommand{\xt}{x_t}
\newcommand{\dxt}{\dot{x}_t}
\newcommand{\tht}{\theta_t}
\newcommand{\dtht}{\dot{\theta}_t}
\newcommand{\Starget}{S_\mathrm{target}}
\newcommand{\Unif}{\mathrm{Unif}}
\theoremstyle{plain}
\theoremstyle{definition}
\newtheorem{assumption}{Assumption}
\theoremstyle{remark}
\newtheorem{remark}{Remark}
\newenvironment{customproofsketch}{\indent \indent \textit{Proof sketch (informal): }}{\hfill $\blacksquare$ \vspace{0.2cm}}
\title{\LARGE \bf
    Initial State Interventions for Deconfounded Imitation Learning
}
\author{Samuel Pfrommer, Yatong Bai, Hyunin Lee, Somayeh Sojoudi
    \thanks{All authors are with the Department of Electrical Engineering and Computer Sciences, University of California Berkeley, Berkeley, CA, 94720.
    {\tt\small sam.pfrommer@berkeley.edu}; {\tt\small yatong\_bai@berkeley.edu}; {\tt\small hyunin@berkeley.edu}; {\tt\small sojoudi@berkeley.edu}}
}
\begin{document}

\maketitle
\thispagestyle{empty}
\pagestyle{empty}

\begin{abstract}
    Imitation learning suffers from \emph{causal confusion}. This phenomenon occurs when learned policies attend to features that do not causally influence the expert actions but are instead spuriously correlated. Causally confused agents produce low open-loop supervised loss but poor closed-loop performance upon deployment. We consider the problem of masking observed confounders in a disentangled representation of the observation space. Our novel masking algorithm leverages the usual ability to intervene in the initial system state, avoiding any requirement involving expert querying, expert reward functions, or causal graph specification. Under certain assumptions, we theoretically prove that this algorithm is \emph{conservative} in the sense that it does not incorrectly mask observations that causally influence the expert; furthermore, intervening on the initial state serves to strictly reduce excess conservatism. The masking algorithm is applied to behavior cloning for two illustrative control systems: \cp and \re.
\end{abstract}

\let\thefootnote\relax\footnotetext{The full technical report, including proofs, can be found on \href{https://arxiv.org/abs/2307.15980}{arXiv}.}

\section{INTRODUCTION}
Imitation learning aims to train an intelligent agent to mimic expert demonstrations for a particular task. Various imitation learning instantiations, such as behavior cloning and inverse reinforcement learning, have been widely applied to fields including robotics \cite{Calinon2007IncrementalLO, Krishnan2018SWIRLAS}, autonomous driving \cite{Wang2021InverseRL, Kuefler2017ImitatingDB}, and optimal navigation \cite{hussein2018deep, shou2020optimal}. Imitation learning enables agents to learn from high-quality samples instead of exploring from scratch, leading to significantly higher learning efficiency when compared with reinforcement learning methods \cite{bojarski2016end}. This is especially important in safety-critical settings where reinforcement learning are difficult to execute \cite{yin2021imitation, pfrommer2022safe}. Even when the flexibility of reinforcement learning is desired, imitation learning can be used to accelerate the learning process \cite{Hester2017DeepQF}.

Despite its broad applicability, imitation learning exhibits an issue known as \textit{causal confusion} \cite{de2019causal}: the learned policy misattributes features which are primarily \emph{correlated} with expert actions as reflecting a \emph{causal} relationship \cite{kaddour2022causal}. This can manifest itself both through the observed features which are spuriously correlated with the expert actions (``nuisance variables'') as well as confounders which are available to the expert but not the imitator (``unobserved confounders''). We restrict ourselves to the former, although for completeness we include approaches addressing the latter in our work.

Consider an illustrative example of causal confusion adapted from \cite{de2019causal}. The task at hand is learning to drive a car from expert demonstrations. A behavior cloning agent is provided video observations from the driver's perspective, including a brake light on the dashboard. Although the learned braking policy is excellent on the supervised dataset, deployment performance is poor: the agent has effectively learned to brake when the brake light is on, instead of attending to other pedestrians or vehicles. In this case, the brake light is a ``nuisance variable,'' and we can dramatically improve the performance of the policy by covering the brake light and reducing information for the model.

Existing approaches for completely masking such nuisance variables generally require either a queryable expert or access to the expert reward function. The seminal work of \cite{de2019causal} introduced a $\beta$-Variational Auto Encoder ($\beta$-VAE) decomposition the observation space along with a joint policy parameterized by hypothetical causal structures. The space of causal structures can then be searched with two distinct algorithms, one leveraging expert queries and the other based on policy evaluations and reward feedback. The existence of nuisance variables was also noted \cite{ortega2021shaking} as part of a broader issue with sequential models that can be addressed with Dagger-style expert queries \cite{ross2011reduction}. The work of \cite{park2021object} partially addresses the nuisance variable problem by regularizing the learned policies to attend to multiple objects in the scene. While this approach does not require policy executions, it only weakens the learner's attention to a nuisance variable and does not eliminate it completely.

The complementary problem of unobserved confounders considers the setting where experts observe confounding variables that are inaccessible to the learner. In the car driving example, this might include a human driver listening to honking that is not detected with visual sensors. One exciting theoretical line of research in this area \cite{zhang2020causal, kumor2021sequential} presents causal-model derived conditions for imitability and an algorithm for imitating the expert policy when possible. However, these works make the strong assumption that the causal graph is provided to the imitation learning agent. Other efforts to apply causal inference techniques to the unobserved confounder problem either require strong assumptions, such as the knowledge of the expert reward \cite{etesami2020causal} and purely additive temporally correlated noise \cite{swamy2022causal}, or only evaluate simple multi-armed bandit problems \cite{vuorio2022deconfounded}.

This work focuses on the problem of observed nuisance variables. Our approach, presented in Section~\ref{sec: method} leverages initial state interventions to identify and completely mask causally confusing features without relying on expert queries or policy interventions. We provide \emph{conservativeness} guarantees for our method in Section~\ref{sec: theory} and present illustrative experiments in Section~\ref{sec: experiments}.

\section{NOTATION AND BACKGROUND}

We denote the set of real numbers by $\R$ and the set of natural numbers by $\N$. The set $\{1, \dots, a\} \subset \N$ is denoted by $[a]$ for $a \in \N$, and similarly ${a, \dots, b} \subset \N$ is denoted by $[a \sdots b]$. For a pair of boolean variables $x$ and $y$, the notation $\wedge$ denotes the ``and'' operator while $\vee$ denotes ``or.'' For a set of boolean variables $\{ x_1, x_2, \ldots, x_n \}$, the notations $\bigwedge_{i=1}^n x_i$ and $\bigvee_{i=1}^n x_i$ denote $x_1 \wedge x_2 \wedge \ldots \wedge x_n$ and $x_1 \vee x_2 \vee \ldots \vee x_n$, respectively. The logical negation of a boolean variable or vector $x$ is denoted by $\neg x$. We denote the identically zero function on a domain by $\zf$, and we write $f(\cdot) \not \equiv \zf$ to mean that $f(\cdot)$ is not equivalent to the zero function over its argument---i.e., there exists an input where $f$ is nonzero.

\subsection{Measure theory and probability}

For a random variable $X$, we introduce the notation ${\Pdi{x} \in M(X)}$ to represent a probability measure over the values $x$ in the domain of $X$, contained in the space of measures $M(X)$. The uniform measure over an interval $[a,b] \subset \R$ is denoted by $\cU(a,b)$. For two measures $\mu$ and $\nu$, we say that $\nu$ is absolutely continuous with respect to $\mu$ if for every $\mu$-measurable set $A$, $\mu(A) = 0$ implies $\nu(A) = 0$. If $\nu$ is absolutely continuous with respect to $\mu$, we let $d\nu / d\mu$ denote the Radon-Nikodym derivative of $\nu$ with respect to $\mu$. The standard Lebesgue measure on $\R$ is denoted $\lambda$. For a measure $\mu$ which is absolutely continuous with respect to $\lambda$, we define its $L_1$ norm in the typical manner
\begin{align*}
    \Pnormi{\mu} \defeq \int \biggl \lvert \frac{d \mu}{d \lambda} \biggr \rvert d \lambda,
\end{align*}
which we take to be the default norm in the Banach space of measures on $\R$. We denote independence between two random variables using $\PI$ and its negation by $\nPI$.

\subsection{Causal graphs and structural causal models}

We denote a directed acyclic graph by $\G$, with the presence of a direct edge between nodes $X$ and $Y$ denoted $X \cau Y$. For a given node $X$ in $\G$, we let $\G_{\underline{X}}$ denote the graph obtained by deleting outgoing edges from $X$. We denote sets of nodes in a graph using bold font (e.g., $\bZ$). The set of parents of a node $X$ in a graph is denoted by $\pa{X}$. A path between two nodes $X$ and $Y$ can consist of arbitrarily directed edges and is said to be blocked by a set of nodes $\bZ$ if the path contains any of the following \cite{Pearl09}:
\begin{itemize}
    \item A chain $I \cau M \cau J$ with $M \in \bZ$.
    \item A fork $I \leftarrow M \cau J$ with $M \in \bZ$.
    \item A collider $I \cau M \leftarrow J$ such that $M \not \in \bZ$ and no descendant of $M$ is in $\bZ$.
\end{itemize}
     Two nodes $X$ and $Y$ are said to be d-separated by $\bZ$ if $\bZ$ blocks every path between $X$ and $Y$. We call a path with all edges oriented the same direction a directed path.

We leverage Pearl's structural causal model (SCM) formalism \cite{Pearl09}. An SCM $\M = \SCM$ consists of endogenous variables $\bV$, exogenous variables $\bU$, and structural equations $\cF$. Each $V \in \bV$ is represented by a node in the causal graph $\G$ and associated with an independently distributed exogenous variable $U_V \in \bU$. The structural equations $f_V \in \cF$ assign values of a particular node $V \in \bV$ as a function $V \defeq f_V(\pa{V}, U_V)$ of its parents and associated exogenous variable. The SCM $\M$ induces a joint distribution $\Pd{\bv}$ over the endogenous variables $\bV$. We say that an SCM $\M$ is faithful to its causal graph $\G$ if the distribution $\Pd{\bv}$ induced by $\M$ contains only the pairwise conditional independencies implied by $\G$; i.e. $X \PI Y \mid \bZ$ in the joint distribution from $\M$ iff $X$ and $Y$ are d-separated by $\bZ$ in $\G$ \cite{Spirtes2000}. As a notable special case, if $\bZ$ is empty and there exists a path from $X$ to $Y$ with no colliders then $X \nPI Y$. 

We define an intervention on a particular node $V$ to be a reassignment of the associated structural equation $f_V$. This intervention can take the form of a constant intervention $V \defeq v$, which we denote by $\doc(V = v)$ for a constant $v$ and may abbreviate to $\doc(v)$. We also define a distributional intervention, denoted by $\doc(V \sim \Pdint{v})$, where we assign $V$ to be drawn from a specified distribution $\Pdint{v}$. We denote the post-intervention SCM by $\Mi$, with an associated causal graph $\Gi$ identical to $\G$ but with incoming edges to $V$ removed. Note that reassigning the associated structural equation for any particular node $V$ induces a new distribution generated by $\Mi$ over the set of all endogenous variables $\bV$, which we denote by $\Pdi{\bv \mid \doc(V = v)}$ or $\Pdi{\bv \mid \doc(V \sim \Pdint{v})}$.

\subsection{Behavior cloning} \label{sec: imitation_learning}
Behavior cloning uses expert trajectories to train an imitating policy. For the system of interest, we use $\stdim$, $\imdim$, $\obdim$, and $\acdim$ to denote the dimensionality of the bounded state space $\stspace \subseteq \R^{\stdim}$, raw image observation space $\imspace \subseteq \R^{\imdim}$, disentangled observation space $\obspace \subseteq \R^{\obdim}$, and action space $\acspace \subseteq \R^{\acdim}$. Let $\St$, $\Im$, $\Ob$, and $\Ac$ be vector random variables taking on values in $\stspace$, $\imspace$, $\obspace$, and $\acspace$, respectively, for a discrete time step $t \in \N$. States variables $\St$ represent the intrinsic low-dimensional dynamics of the system (e.g. simulator variables) while observations $\Ob$ are distilled using a VAE-style framework from high-dimensional image measurements $\Im$, with $\imdim \gg \obdim$. The system dynamics assume that $\St[t+1]$ is strictly a function of $\St$ and $\Ac$. Lower-case script letters $\si \in [\stdim]$, $\oi \in [\obdim]$, and $\ai \in [\acdim]$ denote specific indices in the state, observation, and action vectors. For example, $\St[1][\si]$ refers to the real-valued random variable corresponding to the $\si \tth$ state variable at the first time step. We model $\Hi[1] \sim \cU(a,b)$ to be an unobserved variable capturing uncontrolled and unknown initialization stochasticity (i.e. a random ``seed'').

The collection of states, observations, and actions, along with $\Hi[1]$, comprise endogenous variables in an SCM defining our system. We denote the system SCM by $\Ms$ and denote the corresponding faithful causal graph by $\Gs$. Note that the SCM depends on the choice of policy. Since we aim to infer causalities regarding the expert policy, we generally let any causal relationships refer to the $\Ms$ and $\Gs$ induced by the expert policy unless otherwise stated. We pair the system SCM and causal graph with the tuple $\sys$. Although nodes in $\Gs$ are individual elements in our vector-valued random variables (i.e., $\St[t][\si]$ is a node, not $\St[t]$), with some abuse of notation, we let the edge symbol $\St[t] \cau X$ signify that $\St[t][\si] \cau X$ for some $\si \in [\stdim]$. Similarly, $X \cau \St[t]$ denotes that $X \cau \St[t][\si]$ for some $\si$.

\begin{figure}
    \hspace*{1cm}\begin{tikzpicture}
    \tikzstyle{var}=[inner sep=0.0cm, minimum size=0.75cm]
    \tikzstyle{cause}=[thick, ->, shorten >= 4pt]
    \newcommand\septop{-0.1cm}
    \newcommand\sepbot{0.3cm}

    \newcommand\cola{0cm}
    \newcommand\colb{2cm}
    \newcommand\colc{4cm}

    \newcommand\rowa{0cm}
    \newcommand\rowb{-3.5cm}

    \node[var, blue] (st11) at (\cola, \rowa) {$\St[1][1]$};
    \node[var, blue, below = \septop of st11] (st12) {$\St[1][2]$};
    \node[var, blue, below = \sepbot of st12] (st1f) {$\St[1][\stdim]$};
    \node at ($(st12)!.45!(st1f)$) {\vdots};

    \node[var, red] (ob11) at (\colb, \rowa) {$\Ob[1][1]$};
    \node[var, below = \septop of ob11] (ob12) {$\Ob[1][2]$};
    \node[var, below = \sepbot of ob12] (ob1f) {$\Ob[1][\obdim]$};
    \node (ob1dots) at ($(ob12)!.45!(ob1f)$) {\vdots};

    \node[var] (ac11) at (\colc, \rowa) {$\Ac[1][1]$};
    \node[var, below = \septop of ac11] (ac12) {$\Ac[1][2]$};
    \node[var, below = \sepbot of ac12] (ac1f) {$\Ac[1][\acdim]$};
    \node at ($(ac12)!.45!(ac1f)$) {\vdots};

    \node[var] (st21) at (\cola, \rowb) {$\St[2][1]$};
    \node[var, below = \septop of st21] (st22) {$\St[2][2]$};
    \node[var, below = \sepbot of st22] (st2f) {$\St[2][\stdim]$};
    \node at ($(st22)!.45!(st2f)$) {\vdots};

    \node[var, red] (ob21) at (\colb, \rowb) {$\Ob[2][1]$};
    \node[var, below = \septop of ob21] (ob22) {$\Ob[2][2]$};
    \node[var, below = \sepbot of ob22] (ob2f) {$\Ob[2][\obdim]$};
    \node (ob2dots) at ($(ob22)!.45!(ob2f)$) {\vdots};

    \node[var] (ac21) at (\colc, \rowb) {$\Ac[2][1]$};
    \node[var, below = \septop of ac21] (ac22) {$\Ac[2][2]$};
    \node[var, below = \sepbot of ac22] (ac2f) {$\Ac[2][\acdim]$};
    \node at ($(ac22)!.45!(ac2f)$) {\vdots};

    \node[var, above=0.5cm of ob11] (w) {$\Hi[1]$};

    \node[below = 1.0cm of ob22] {\vdots};

    \draw [decorate, decoration = {brace}, thick] (-0.5, -2) -- (-0.5, 0.2) node[pos=0.5,left=10pt,rotate=90,xshift=0.55cm]{$t=1$};
    \draw [decorate, decoration = {brace}, thick] (-0.5, -5.5) -- (-0.5, -3.3) node[pos=0.5,left=10pt,rotate=90,xshift=0.55cm]{$t=2$};
    
    \draw [cause, shorten >= -4pt] (w.south) -- (ob11.north);
    \draw [cause, shorten >= -4pt] (w.south) -- (ac11.north west);
    \draw [cause, shorten >= -4pt, opacity=0.1] (w.south) -- (st12.north east);
    
    \draw [cause] (st11.east) -- (ob12.west);
    \draw [cause] (st12.east) -- (ob12.west);
    \draw [cause, shorten >= 10pt] (st12.east) -- (ob1dots.west);
    \draw [cause] (ob1f.east) -- (ac12.west);
    \draw [cause] (st1f.east) -- (ob1f.west);
    \draw [cause] (ob12.east) -- (ac11.west);
    \draw [cause] (ob12.east) -- (ac1f.west);

    \draw [cause] (st21.east) -- (ob22.west);
    \draw [cause] (st22.east) -- (ob22.west);
    \draw [cause, shorten >= 10pt] (st22.east) -- (ob2dots.west);
    \draw [cause] (ob2f.east) -- (ac22.west);
    \draw [cause] (st1f.east) -- (ob1f.west);
    \draw [cause] (st2f.east) -- (ob2f.west);
    \draw [cause] (ob22.east) -- (ac21.west);
    \draw [cause] (ob22.east) -- (ac2f.west);

    \draw [cause, shorten >= 0pt] (ob1f.south) -- (ac21.north);
    
    \tikzstyle{conbot}=[above=.1cm,inner sep=0]
    \tikzstyle{contop}=[below=.1cm,inner sep=0]

    \draw [decorate, decoration = {brace}, thick] (st1f.south east) -- (st1f.south west) node[pos=0.5,contop] (st1) {};
    \draw [decorate, decoration = {brace}, thick] (ac1f.south east) -- (ac1f.south west) node[pos=0.5,contop] (ac1) {};

    \draw [decorate, decoration = {brace}, thick] (st21.north west) -- (st21.north east) node[pos=0.5,conbot] (st2) {};

    \draw [cause, shorten >= 0pt] (st1) -- (st2);
    \draw [cause, shorten >= 0pt] (ac1) -- (st2);
    \draw [cause, shorten >= 0pt] (ac1) -- (ob21);
\end{tikzpicture}
    \caption{An example (unknown) system causal graph $\Gs$. We hope to mask $\Ob[][1]$ (e.g. brake light observation), which has no causal edge to any expert action but is correlated with $\Ac[][1]$ through the confounding random ``seed'' $\Hi[1]$ and future spurious correlations. In $\Gs$, $\Hi[1]$ also causally influences $\St[1][2]$; however, if we intervene on $\St[1]$ (blue) this edge is removed in $\Gsi$ (light shading). This enables our masking algorithm to more reliably leverage state initialization to detect potential causes between observations and actions (Section~\ref{sec: derivation}).}
    \label{fig: structure}
\end{figure}
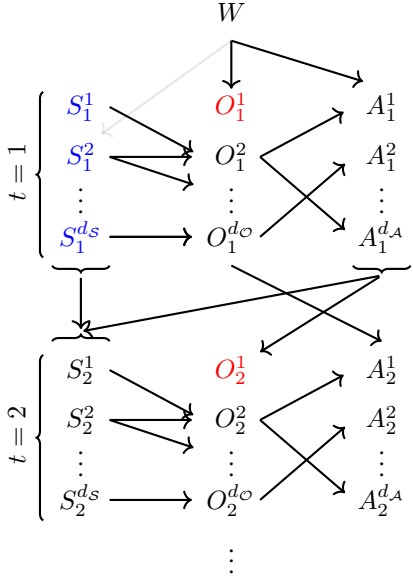

This work evaluates the importance of interventionally assigning the initial state to a particular distribution ${\St[1] \sim \Pdint{\st[1]}}$. This intervention yields a modified SCM $\Msi$ with a corresponding (not necessarily faithful) causal graph $\Gsi$, which removes the edge $\Hi[1] \to \St[1]$ in $\Gs$ (Figure~\ref{fig: structure}). We collect $N$ arbitrary-length expert trajectories from $\Msi$. The collection of all such trajectories is denoted $\trajs$. Among these $N$ trajectories, the $i^{\textrm{th}}$ trajectory consists of the tuple
\[
	\traj = \langle \st[1], \dots, \st[T];\ \im[1], \dots, \im[T];\ \ob[1], \dots, \ob[T];\ \ac[1], \dots, \ac[T] \rangle,
\]
where lowercase letters represent a concrete random variable value (to avoid confusion with indices, we use $\im$ to denote a value of $\Im$). Implicit in this definition is the existence of an \emph{encoder} $\enc: \imspace \to \obspace$ mapping each image $\im$ to a disentangled observation $\ob$. We characterize trajectories as containing observations for simplicity; our environment only provides the images $\im$, and the extraction of disentangled observations $\ob$ is method-dependent.

When training agents on $\trajs$, we parameterize policies as a neural network $\net: \imspace^L \to \acspace$. The neural policy maps some history of observations to an action $a_t$ via
\begin{align} \label{eqn: policy}
    \ac[t] = \net(\im[t], \im[t-1], \dots, \im[t-L+1]).
\end{align}
We then train $\net$ via standard behavior cloning by randomly sampling batches of images and expert actions from $\trajs$ and performing supervised regression.

\subsection{Statistical independence tests} \label{sec: hoeffding}
\newcommand{\Nhoeff}{N_{\textrm{Hoeff}}}

Our method relies on identifying whether two random variables are statistically dependent. While this is a challenging problem with a rich literature \cite{sheskin2020handbook}, in this paper, we only briefly introduce a well-known independence test for continuous distributions based on Hoeffding's D statistic \cite{hoeffding1994non, even2020independence}. Consider two real-valued random variables $X$ and $Y$ with a joint cumulative distribution function ${F(x, y) = \Pd{X \leq x, Y \leq y}}$. Hoeffding's D statistic operates on $\Nhoeff$ independent pairs of observations $\{(X_1, Y_1), \dots (X_{\Nhoeff}, Y_{\Nhoeff})\}$ and outputs a real number $D$ in the range $[-0.5, 1]$, with $D > 0$ indicating dependence. The computational complexity of calculating this statistic is $\mathcal{O} (\Nhoeff \log \Nhoeff)$. For absolutely continuous joint distributions, the D statistic is unbiased and consistent as $\Nhoeff \to \infty$, meaning that the dependence is correctly represented with probability arbitrarily close to $1$. Subsequent variations of the D statistic maintain consistency even for non-absolutely continuous joint distributions \cite{blum1961distribution}, although these complications are outside the scope of our work. We refer to the independence test based on the Hoeffding's D statistic as Hoeffding's independence test.

\section{PROBLEM STATEMENT AND METHOD} \label{sec: method}
We address the \emph{causal confusion} problem in imitation learning and aim to mask spuriously correlated observations. To this end, we investigate the following problem statement:

\begin{center}
\emph{How can we identify and eliminate spuriously correlated observations without relying on online expert queries or knowledge of the expert reward function?}
\end{center}

Our approach addresses this problem in a theoretically grounded way. Specifically, we make the following contributions:
\begin{enumerate}
    \item We present an algorithm for identifying and masking causally confusing observations \emph{without relying on reward function knowledge, expert queries, or causal graph knowledge}.
    \item We prove that, under certain conditions, our procedure is \emph{conservative}: if an observation causally affects the expert actions, it will not be masked.
    \item We demonstrate the importance of \emph{initial state interventions} by showing theoretically that the interventions reduce excess conservatism in the masking algorithm.
\end{enumerate}

Section~\ref{sec: assumptions} presents and analyzes the assumptions underlying our method. Section~\ref{sec: derivation} motivates and derives our method, which is then presented formally in Section~\ref{sec: workflow}.

\subsection{Assumptions} \label{sec: assumptions}

Our proposed method relies on the following assumptions to ensure the theoretical guarantees in Section~\ref{sec: theory}.

\begin{assumption} \label{ass: time_invariance}
The system causal graph $\Gs$ is time invariant. Namely, consider two arbitrary time steps ${t, t' \in \N}$ with $t' \geq t$ and two arbitrary time-indexed variables $X_t$ and $Y_{t'}$ in $\Gs$. Then if $X_t \cau Y_{t'}$ is an edge in $\Gs$, then so is $X_{t + \Delta} \cau Y_{t' + \Delta}$ for any $\Delta \in \Z$ such that ${\min(t+\Delta,t'+\Delta) \geq 1}$.
\end{assumption}

Time-invariance of the expert policy allows for causal inference via interventions on the initial state $\St[1]$. Otherwise we would require the ability to intervene at arbitrary time steps, which is unrealistic for most real-world systems.

\begin{assumption} \label{ass: expert_attend}
The expert policy attends only to observational information derived from the underlying state. Namely, if $\Ob[t][\oi] \cau \Ac[t'][\ai]$ in $\Gs$ for $t,t' \in \N$ with $t' \geq t$, then there must exist an index $\si$ such that ${\St[t][\si] \cau \Ob[t][\oi]}$.
\end{assumption}

Assumption~\ref{ass: expert_attend} reflects the intuition that the expert policy itself must not be fooled by spurious information in the observation space. This is a natural assumption in the considered case where the dynamics of the underlying system depend only on $\St$, not $\Ob$. 

\begin{assumption} \label{ass: reaction_horizon}
    The expert policy reacts to observations within a \emph{reaction horizon} $H \in \N$. Specifically, if $\Ob[t][\oi] \cau \Ac[t_1][\ai]$ in $\Gs$ for some $t_1 > t$ and particular $t \in \N$, $\oi \in [\obdim]$, and $\ai \in [\acdim]$, then there exists a $t_2 \in [t \sdots t+H-1]$ such that $\Ob[t][\oi] \cau \Ac[t_2][\ai]$.
\end{assumption}

Assumption~\ref{ass: reaction_horizon} imposes a horizon within which the expert is assumed to react to a hypothetical intervention on a state or observation. For finite-length trajectories, $H$ can be chosen to be the entire trajectory length, with the algorithm and theory still valid. As such, $H$ introduces a hyperparameter that allows for more tractable computation under some assumptions on the expert. Our experiments show that $H$ can be much smaller than the trajectory length for certain practical dynamic systems and experts.

Finally, we formalize a class of SCMs that behave nicely under interventions.

\begin{assumption} \label{ass: abs_cont}
The system SCM $\Ms=\SCM$ is \emph{interventionally absolutely continuous}, meaning that for any disjoint sets of nodes $\bX$, $\bY$, and $\bZ$, the interventional distribution $\Pd{\bz \mid \doc(\bX = \bx), \by}$ is absolutely continuous with respect to the Lebesgue measure, has a bounded Radon-Nikodym derivative, and is continuous as a measure-valued function with respect to $\bx$ and $\by$.
\end{assumption}

Assumption~\ref{ass: abs_cont} stipulates that the probability distribution induced by our SCM on any set of non-intervened nodes is absolutely continuous with bounded density. This is a technical condition that facilitates analysis and allows us to assert that Hoeffding's test is consistent. We note that subsequent D-statistic variations allow for non-absolutely continuous joint distributions \cite{blum1961distribution} --- we leave the theoretical and practical implications of more sophisticated testing to future work.

\subsection{Derivation} \label{sec: derivation}

Our aim is to mask a particular observation $\Ob[][\oi]$ across all time steps if it has no causal effect on any expert action within the reaction horizon. As intervening on observations is impractical, this causality is challenging to deduce. We do, however, assume the ability to intervene on the system in one specific instance: setting the state variables $\St[1]$ at initialization. We manipulate $\St[1]$ to infer the possible existence of a true causal relationship.

We first motivate our approach from an arbitrary time step $t \geq 2$ before specializing on the initialization. Consider arbitrary observation and action indices ${\oi \in [\obdim], \ai \in [\acdim]}$ and time steps $t, t' \in \N$ with $t' \in [t \sdots t + H - 1]$. Assumption~\ref{ass: expert_attend} states that a causal effect $\Ob[t][\oi] \cau \Ac[t'][\ai]$ must arise from a larger causal path
\begin{align} \label{eqn: causal_path}
    \St[t][\si] \cau \Ob[t][\oi] \cau \Ac[t'][\ai]
\end{align}
in $\Gs$, for some state variable index $\si \in [\stdim]$. We now observe that by faithfulness of $\sys$ it must be that $\St[t][\si] \nPI \Ob[t][\oi]$ and $\St[t][\si] \nPI \Ac[t'][\ai]$; i.e. the causal relationships in $\Gs$ imply probabilistic dependencies in the induced distribution from $\Ms$. Note that these are \emph{statistical} statements which can be ascertained from the observational data. We define the boolean variable $\Dfull$ to check these independencies:
    \begin{align} \label{eqn: causal_check}
        \Dfull \coloneqq (\St[t][\si] \nPI \Ob[t][\oi]) \land (\St[t][\si] \nPI \Ac[t'][\ai]),
    \end{align}
    and introduce the ``potential cause'' notation 
    \begin{align} \label{eqn: potential_cause}
        \Ob[t][\oi] \pcau \Ac[t'][\ai] \coloneqq \bigvee_{\si=1}^{\stdim} \left( \Dfull \right).
    \end{align}

The boolean-valued statement $\Ob[t][\oi] \pcau \Ac[t'][\ai]$ intuitively captures that, based on observational data, there may (but need not) exist a true causal edge $\Ob[t][\oi] \cau \Ac[t'][\ai]$ generated by some $\St[t][\si]$ as in \eqref{eqn: causal_path}. We denote by $\Ob[t][\oi] \npcau \Ac[t'][\ai]$ the logical negation of $\Ob[t][\oi] \pcau \Ac[t'][\ai]$. As we will elaborate in more detail shortly, if $\Ob[t][\oi] \npcau \Ac[t'][\ai]$ for all actions $\ai \in [\acdim]$ and $t'$ in the reaction horizon, we want to ``mask'' the $\oi \tth$ observation as it has no causal effect on the expert action but could be spuriously correlated in a way that undermines the imitation learning policy performance. 

It is immediate from the above faithfulness argument that for $t \geq 2$, we have the implication
\begin{align} \label{eqn: conservative_general}
    \Ob[t][\oi] \cau \Ac[t'][\ai] \implies \Ob[t][\oi] \pcau \Ac[t'][\ai].
\end{align}
Note that (\ref{eqn: conservative_general}) provides a \emph{conservativeness} guarantee: if an observation causally influences an action, we will not mistakenly conclude from observational data that it does not, and hence incorrectly mask an observation that is actually used by the expert policy. However, this conservativeness is not apparent for $t=1$ in the modified causal model $\sysi$, where we intervene to specify the initial state distribution, overriding the natural randomness resulting from $\Hi[1]$ and potentially breaking faithfulness. As a simple counterexample, initializing $\St[1]$ to a constant vector would make $\St[1][\si]$ independent of every other random variable in the causal graph, and therefore no potential causes could be discovered as \eqref{eqn: causal_check} would always be false. Nonetheless, when a sufficiently sensible initialization distribution is used, we prove that the conservativeness result still holds under intervention on $\St[1]$ in Section~\ref{sec: theory}.

The reverse implication to \eqref{eqn: conservative_general} does not hold. It is possible that spurious statistical relationships exist while a causal edge $\Ob[t][\oi] \to \Ac[t'][\ai]$ does not. Indeed, for $t \geq 2$, the abundance of chronologically antecedent variables virtually guarantees that all variables have share a common cause and hence a statistical dependence. The sole exception is the initial state $\St[1]$. By intervening on $\St[1]$, we eliminate the incoming edge from the only possible common ancestor $\Hi[1]$ in the causal graph (Figure~\ref{fig: structure}). Therefore, we expect that this interventional ability should help eliminate potential causes $\Ob[1][\oi] \pcau \Ac[t'][\ai]$ which do not exist in the true causal graph and reduce excessive conservativeness in the algorithm. We analyze this idea formally in Section~\ref{sec: theory}.

The culmination of our efforts is described in Algorithm~\ref{alg: mask}, which checks for potential causes at $t = 1$ using expert data $\trajs$ collected from the interventional system $\sysi$. Note that Algorithm~\ref{alg: mask} invokes the $\textsc{Hoeffding}$ routine to compute Hoeffding's D statistic for independence between two variables. This test is computed over our dataset of trajectories $\trajs$, extracting exactly one pair of variables from each trajectory ($\Nhoeff = N$). For concreteness, consider the call $\textsc{Hoeffding}(\St[1][2] \nPI \Ac[3][4]\ \mathrm{in}\ \trajs)$. This extracts, from each trajectory, the second element of the $t=1$ state and the fourth element of the $t=3$ action. These $N$ pairs are then supplied to Hoeffding's test, which returns a real number in the range $[-0.5, 1]$, with a value greater than zero indicating dependence. Since perfect observational disentanglement is unrealistic, we introduce a small positive threshold hyperparameter $\gamma$.

Algorithm~\ref{alg: mask} is presented for readability and can be implemented more efficiently. The Hoeffding tests between $\St[t][\si]$ and $\Ob[t][\oi], \Ac[t'][\ai]$ can be precomputed, yielding the runtime
\[
    O\left(\stdim (\obdim + H \acdim) N \log N\right),
\]
where $N \log N$ is the cost of evaluating Hoeffding's test for a specific pair of variables over $N$ trajectories. In practice, Hoeffding's test executions are very fast---on the order of milliseconds for $N=10^3$---and incur a negligible overhead compared with the training time of imitation learning.

 \begin{remark}
     The reader may have noticed that our approach bears a resemblance to \emph{instrumental variable regression}, a statistical technique for estimating causal relationships that has also received some attention in the causal imitation learning literature \cite{swamy2022causal}. We emphasize that $\St[t][\si]$ does not constitute a valid instrumental variable in the causal path \eqref{eqn: causal_path} as there may be many other paths between $\St[t][\si]$ and $\Ac[t'][\ai]$ which are not mediated by $\Ob[t][\oi]$. Thus while the spirit of our approach is related to instrumental variable regression, we cannot use $\St[t][\si]$ to precisely determine a causal relationship between $\Ob[t][\oi]$ and $\Ac[t'][\ai]$ and only use $\St[t][\si]$ to provide evidence of a potential cause.
\end{remark}

\subsection{Imitation Learning Workflow} \label{sec: workflow}

Drawing on the masking approach developed in Section~\ref{sec: derivation}, we summarize our overall deconfounded imitation learning workflow as the following four steps.

\begin{enumerate}
    \item Collect random-policy trajectories to learn a observation representation using a $\beta$-VAE, denoted by ${\dec\, \circ\, \enc: \imspace \to \imspace}$, with an encoder $\enc: \imspace \to \obspace$ and decoder $\dec: \obspace \to \imspace$. For a well-trained $\beta$-VAE, $\dec\, \circ\, \enc$ approximates the identity. We rely on $\beta$-VAEs' latent space regularization to produce disentangled observations.
    \item Collect a sequence of $N$ trajectories $\trajs$ from the expert policy, with the starting state distribution $\Pdint{\st[1]}$ over $\stspace$ having any density that is everywhere nonzero (e.g. uniform).
    \item Execute Algorithm~\ref{alg: mask} on $\trajs$ to obtain the observation mask $\maski \in \{0,1\}^{\obdim}$, where $\maski_{\oi} = 1$ if the $\oi \tth$ observation is to be masked.
    \item Train the final policy $\compnet: \imspace^L \to \acspace$ on $\trajs$ using standard supervised learning; $\compnet$ masks the disentangled observation space using $\maski$ before executing a learnable policy network $\net$:
        \[
            \compnet(\im[t], \dots, \im[t-L+1]) = \net(\vaemask(\im[t]), \dots, \vaemask(\im[t-L+1])),
        \]
        where the masked $\beta$-VAE $\vaemask: \imspace \to \imspace$ has its weights fixed and is defined as
        \[
            \vaemask(\im[][]) = \dec(\neg\maski \odot \enc(\im[][])).
        \]
\end{enumerate}

Note that this overall structure generally follows the seminal work of \cite{de2019causal}. Our key contribution is Algorithm~\ref{alg: mask}, which provides a mask for the disentangled observations without relying on expert queries, the expert reward function, or specification of the causal graph. A visualization of Algorithm~\ref{alg: mask} is provided in Figure~\ref{fig: dependencies} for the \cp system considered in the experiments. We show in Section~\ref{sec: theory} that Algorithm~\ref{alg: mask} enjoys notable theoretical guarantees.

\begin{algorithm}
	\caption{Masking algorithm} \label{alg: mask} 
    \hspace{2mm} Hyperparameter $\gamma > 0$.

\begin{algorithmic}
\Procedure{Mask}{$\trajs$}
	\State Initialize $\maski \in \{0, 1\}^{\obdim}$ to be an all-zero vector.
	\For{$\oi = 1, \ldots, \obdim$}
    	\State Mask the $\oi^{\text{th}}$ observation according to
    	\vspace{-1.5mm}
    	\begin{small}
    	\begin{align} \label{eqn: mask}
            \quad \maski_{\oi} \gets \left( \Ob[1][\oi] \npcau \Ac[t'][\ai] \;\; \forall \ai \in [\acdim], \;\; \forall t' \in [H] \right), \\[-6.5mm] \nonumber
    	\end{align}
    	\end{small}
        \hspace{8mm} computing $\Ob[1][\oi] \npcau \Ac[t'][\ai]$ using \textsc{Check}. 
	\EndFor
	\State \textbf{return} $\maski$
\EndProcedure
\end{algorithmic}
\vspace{0.2cm}
\begin{algorithmic}
\Procedure{Check$\{\Ob[t][\oi] \pcau \Ac[t'][\ai]\}$}{$\trajs$} 
	\For{$\si = 1, \dots, \stdim$}
		\State $a \gets \textsc{Hoeffding}(\St[t][\si] \nPI \Ob[t][\oi]\ \mathrm{in}\ \trajs) > \gamma$
		\State $b \gets \textsc{Hoeffding}(\St[t][\si] \nPI \Ac[t'][\ai]\ \mathrm{in}\ \trajs) > \gamma$
		\If{$a \land b$}
			\State {\bf return} True
		\EndIf
	\EndFor
	\State {\bf return} False
\EndProcedure
\end{algorithmic}
\end{algorithm}

\section{THEORETICAL GUARANTEES} \label{sec: theory}

In this section, we delve into the theoretical properties of Algorithm~\ref{alg: mask}. Theorem~\ref{thm: conservative} demonstrates that if we intervene on the initial state $\St[1]$ and meet certain conditions in the infinite-trajectory regime, the algorithm remains \emph{conservative}, ensuring that no observation that causally influences the expert is mistakenly masked. Additionally, Theorem~\ref{thm: no_more_conservative} and Proposition~\ref{prop: strictly_less_conservative} highlight the effectiveness of intervening on $\St[1]$ in mitigating overconservativeness in the masking algorithm. Specifically, Theorem~\ref{thm: no_more_conservative} asserts that the correctly masked observations under the original causal model $\sys$ will also be masked under the intervened causal model $\sysi$. Proposition~\ref{prop: strictly_less_conservative} showcases a particular set of systems where the intervention only results in masks under $\sysi$, providing compelling evidence that the masking algorithm is more effective after intervening on $\St[1]$.

\begin{toappendix}
    We first introduce a series of auxiliary lemmas.
\begin{lemma} \label{lem: faithful}
    Consider an interventionally absolutely continuous SCM $\M$ with a faithful causal graph $\G$ that contains a directed path from $X$ to $Y$. Then provided a set $\bZ$ contains all ancestors of $X$ but none of its descendants, then for any assignment $\bz$ to $\bZ$ there exist values $x, x'$ such that
    \[
        \Pnorm{\Pd{y \mid \doc(x), \bz} - \Pd{y \mid \doc(x'), \bz}} > 0,
    \]
    viewed as induced measures over $Y$.
\end{lemma}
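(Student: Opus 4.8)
The plan is to convert the interventional claim into a statement about ordinary conditional dependence in the faithful model $\M$, and then read off two witnesses $x,x'$ from the directed path.

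First, I would show that $\bZ$ satisfies the back-door criterion for $(X,Y)$, so that the $\doc$-interventions can be replaced by conditioning. Any back-door path from $X$ to $Y$ begins with an edge into $X$; its first internal node $W$ is thus a parent of $X$, hence an ancestor, so $W \in \bZ$. Since the path-edge at $W$ points toward $X$, the node $W$ is a chain or fork (never a collider) on that path, and $W \in \bZ$ blocks it. As $\bZ$ moreover contains no descendant of $X$, every back-door path is blocked, which is exactly the statement that $X$ and $Y$ are d-separated by $\bZ$ in $\G_{\underline{X}}$. Pearl's Rule 2 then gives the identity $\Pd{y \mid \doc(x), \bz} = \Pd{y \mid x, \bz}$ for every value in the support of $\bZ$, reducing the problem to exhibiting dependence of $Y$ on $X$ given $\bZ$.

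Second, I would exploit the hypothesized directed path $X \cau \cdots \cau Y$. Every internal node of this path, and $Y$ itself, is a descendant of $X$ and therefore lies outside $\bZ$; the path is thus an unblocked chain and d-connects $X$ and $Y$ given $\bZ$. Faithfulness of $\M$ to $\G$ then yields $X \nPI Y \mid \bZ$, so by the reduction above the measure-valued map $x \mapsto \Pd{y \mid \doc(x), \bz}$ cannot be constant in $x$. Non-constancy supplies $x,x'$ whose induced laws over $Y$ differ, and since Assumption~\ref{ass: abs_cont} makes both laws absolutely continuous with bounded density, the difference registers as a strictly positive $L_1$ norm, i.e. $\Pnorm{\Pd{y \mid \doc(x), \bz} - \Pd{y \mid \doc(x'), \bz}} > 0$.

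The main obstacle is the quantifier \emph{for any} assignment $\bz$. Faithfulness only constrains the pair $(X,Y)$ given the \emph{variable} $\bZ$, which a priori yields non-constancy of $x \mapsto \Pd{y \mid x, \bz}$ on a positive-measure set of $\bz$ rather than at every $\bz$; exceptional structural cancellations could in principle degenerate the conditional at isolated values. I would close this gap using Assumption~\ref{ass: abs_cont}: the continuity of the interventional distribution as a measure-valued function of $(x,\bz)$, together with the full-support conditional law of $X$ given $\bz$ guaranteed by interventional absolute continuity, propagates the dependence across the support of $\bZ$. This continuity step is where I expect the delicate work to lie, and it is exactly what the technical regularity in Assumption~\ref{ass: abs_cont} is designed to enable.
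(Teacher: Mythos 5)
Your argument is essentially the paper's: the paper likewise shows $X$ and $Y$ are d-separated by $\bZ$ in $\G_{\underline{X}}$ (via the same observation that any remaining path must begin with an edge $X \leftarrow Z$ into an ancestor $Z \in \bZ$), applies Rule 2 of do-calculus to replace $\doc(x)$ by conditioning on $x$, and invokes faithfulness along the directed path---which $\bZ$ cannot block since it contains no descendants of $X$---to obtain dependence; the only difference is the order of the two steps. The ``for any $\bz$'' quantifier you flag in your last paragraph is a genuine subtlety, but the paper does not perform the delicate continuity argument you anticipate---it simply reads faithfulness as yielding dependence at each fixed $\bz$---and note that continuity of $(x,\bz) \mapsto \Pd{y \mid x, \bz}$ alone would not close the gap anyway, since a continuous family can be non-constant in $x$ for a positive-measure set of $\bz$ while degenerating to a constant at exceptional values of $\bz$.
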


\begin{proof}
    As $\bZ$ contains no descendants of $X$, it cannot block the directed path between $X$ and $Y$ and hence the Causal Markov Condition does not declare $X$ and $Y$ independent. Faithfulness stipulates that $X$ and $Y$ are therefore dependent given $\bz$, so there exist $x,x'$ such that
    \[
        \Pnorm{\Pd{y \mid x, \bz} - \Pd{y \mid x', \bz}} > 0.
    \]

    The second rule of do calculus states that we can exchange observation and intervention if $X$ and $Y$ are independent given $\bz$ in the causal graph $\G_{\underline{X}}$ obtained by removing outgoing edges from $X$. If we remove outgoing edges from $X$, the only remaining paths between $X$ and $Y$ must contain an edge $X \leftarrow Z$ for some variable $Z$. This makes $Z$ an ancestor of $X$, and therefore $Z$ is included in $\bZ$, and both paths of the form $X \leftarrow Z \leftarrow J$ and $X \leftarrow Z \cau J$ are blocked by $\bZ$. This means that $X$ and $Y$ are $d$-separated by $\bZ$ in $\G_{\underline{X}}$, and we can apply the second do-calculus rule to conclude that
    \begin{align*}
        \Pd{y \mid \doc(x), \bz} =\ &\Pd{y \mid x, \bz}, \\
        \Pd{y \mid \doc(x'), \bz} =\ &\Pd{y \mid x', \bz},
    \end{align*}
    and hence
    \[
        \Pnorm{\Pd{y \mid \doc(x), \bz} - \Pd{y \mid \doc(x'), \bz}} > 0.
    \]
\end{proof}

\begin{lemma} \label{lem: measure_zero}
    Consider a set $E \subseteq \R$ where for each $x \in E$, there exists a ball $B(x, \epsilon_x)$ which contains no point in $E$. Then $E$ has measure zero with respect to the standard Lebesgue measure on $\R$.
\end{lemma}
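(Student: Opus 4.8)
The plan is to read the hypothesis as asserting that every $x \in E$ is an \emph{isolated} point of $E$: the ball $B(x, \epsilon_x)$ contains no point of $E$ other than $x$ itself. Under this reading, $E$ is a discrete subset of $\R$, and the strategy is to show that any such set is countable and therefore Lebesgue-null.

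First I would establish countability via an injection into a countable index set. For each $x \in E$, shrink $B(x,\epsilon_x)$ to an open interval $I_x = (p_x, q_x)$ with rational endpoints $p_x, q_x \in \mathbb{Q}$ satisfying $x \in I_x \subseteq B(x, \epsilon_x)$; such rationals exist by density of $\mathbb{Q}$ in $\R$. By the isolation hypothesis, $I_x \cap E = \{x\}$. Define $\phi : E \to \mathbb{Q} \times \mathbb{Q}$ by $\phi(x) = (p_x, q_x)$. I would then verify $\phi$ is injective: if $\phi(x) = \phi(y)$ then $I_x = I_y$, so $y \in I_y \cap E = I_x \cap E = \{x\}$, forcing $x = y$. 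Since $\mathbb{Q} \times \mathbb{Q}$ is countable, this injection shows $E$ is countable.

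Finally, I would conclude via countable subadditivity of the Lebesgue measure $\lambda$: writing $E$ as a countable union of singletons $\{x\}$, each with $\lambda(\{x\}) = 0$, yields $\lambda(E) \leq \sum_{x \in E} \lambda(\{x\}) = 0$, so $E$ has measure zero.

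There is no deep obstacle here; the only real subtlety is the interpretation of the hypothesis. Taken literally, ``$B(x,\epsilon_x)$ contains no point in $E$'' is vacuous for nonempty $E$ since $x \in E \cap B(x,\epsilon_x)$, so the intended meaning must be that $x$ is the unique point of $E$ in the ball, i.e. $x$ is isolated. The one step requiring care is choosing the rational-endpoint interval $I_x$ strictly inside $B(x,\epsilon_x)$, so that the isolation property transfers from the ball to $I_x$ and thereby makes $\phi$ both well-defined and injective.
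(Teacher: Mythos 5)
Your proof is correct, and it takes a mildly different route from the paper's. The paper covers $E$ by the balls $\{B(x,\epsilon_x)\}$, invokes the Lindel\"of property of subsets of $\R$ to extract a subcover $\{I_i\}$, and concludes by subadditivity since each $E \cap I_i$ is a singleton; you instead prove outright that $E$ is countable, via the injection $x \mapsto (p_x,q_x)$ into $\mathbb{Q}\times\mathbb{Q}$, and then write $E$ as a countable union of null singletons. Both arguments ultimately rest on second countability of $\R$ and on the same charitable reading of the hypothesis (which, as you note, is vacuous if taken literally --- the paper's phrase ``each $E \cap I_i$ contains only a singleton'' shows it adopts the same isolated-point interpretation). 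Your version is slightly more self-contained and, incidentally, avoids a slip in the paper's own proof: Lindel\"of yields only a \emph{countable} subcover, not a finite one as the paper claims (e.g.\ $E = \Z$ satisfies the hypothesis but admits no finite subcover by such balls); the paper's conclusion still goes through with a countable subcover, but your countability-plus-subadditivity argument never risks that confusion. The one step that deserves the care you gave it is choosing $I_x$ with rational endpoints strictly inside $B(x,\epsilon_x)$ so that $I_x \cap E = \{x\}$, which is exactly what makes $\phi$ injective.
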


\begin{proof}
    As $E$ is a subset of $\R$, it is Lindel{\"o}f, and the cover of $E$ by the collection of balls $\{B(x, \epsilon_x) \mid x \in E\}$ has a finite subcover. Enumerate this subcover as $I_i$; we then have
    \begin{align*}
        \lambda(E) &= \lambda(E \cap (\cup_i I_i)) \leq \sum_i \lambda(E \cap I_i) = 0,
    \end{align*}
    as each $E \cap I_i$ contains only a singleton.
\end{proof}

\begin{lemma} \label{lem: deriv_abs_value_swap}
    Let $f(x)$ be a differentiable function of $x \in \R$ at some $\bar x \in \R$ with $f(\bar x) = 0$. Then 
\[
\frac{d}{dx} \biggr \rvert_{\bar{x}^+} |f(x)| = \Bigg| \frac{d}{dx} \biggr \rvert_{\bar{x}} f(x) \Bigg|
\quad \mathrm{and} \quad
\frac{d}{dx} \biggr \rvert_{\bar{x}^-} |f(x)| = - \Bigg| \frac{d}{dx} \biggr \rvert_{\bar{x}} f(x) \Bigg|.
\]
\end{lemma}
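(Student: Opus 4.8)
The plan is to reduce both claimed identities to the definition of the one-sided derivative and to exploit the hypothesis $f(\bar x) = 0$ to eliminate the constant term from the difference quotient. Since $|f(\bar x)| = 0$, the right derivative of $|f|$ at $\bar x$ is
\[
    \lim_{h \to 0^+} \frac{|f(\bar x + h)| - |f(\bar x)|}{h} = \lim_{h \to 0^+} \frac{|f(\bar x + h)|}{h},
\]
and the analogous reduction holds for the left derivative with $h \to 0^-$. The key algebraic step I would use is to factor the difference quotient of $f$ out of the absolute value: using $f(\bar x) = 0$ so that $f(\bar x + h) = f(\bar x + h) - f(\bar x)$, and using $|ab| = |a||b|$ together with $|h| > 0$,
\[
    \frac{|f(\bar x + h)|}{h} = \left| \frac{f(\bar x + h) - f(\bar x)}{h} \right| \cdot \frac{|h|}{h}.
\]

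First I would handle the right derivative. For $h > 0$ we have $|h|/h = 1$, so the displayed expression equals $\left| \frac{f(\bar x + h) - f(\bar x)}{h} \right|$. Since $f$ is differentiable at $\bar x$, the inner difference quotient converges to $f'(\bar x)$, and continuity of the absolute value then forces the whole expression to converge to $|f'(\bar x)|$, establishing the first identity. For the left derivative the only change is that for $h < 0$ we have $|h|/h = -1$; the inner difference quotient still converges to $f'(\bar x)$ because $f$ is (two-sided) differentiable at $\bar x$, so the product converges to $-|f'(\bar x)|$, giving the second identity.

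I do not expect any serious obstacle here, as the result is essentially a first-order Taylor computation. The only point requiring minor care is the bookkeeping of the sign of $h$: the factor $|h|/h$ is precisely what produces the sign discrepancy between the two one-sided derivatives, and one must confirm that pulling $|h|$ out of $|f(\bar x + h) - f(\bar x)|$ is legitimate (it is, by multiplicativity of the absolute value). It is also worth stating explicitly that the \emph{two-sided} differentiability of $f$ at $\bar x$ is what guarantees the inner difference quotient shares the same limit $f'(\bar x)$ from both sides, which is exactly what makes both identities resolve to $\pm |f'(\bar x)|$ with the same magnitude.
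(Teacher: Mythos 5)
Your proof is correct and follows essentially the same route as the paper's: use $f(\bar x) = 0$ to rewrite the difference quotient of $|f|$ as the absolute value of the difference quotient of $f$, then pass the limit inside by continuity of $|\cdot|$ and differentiability of $f$. The only cosmetic difference is that you make the sign factor $|h|/h$ explicit so both one-sided cases are handled uniformly, whereas the paper proves the right-derivative case and notes the left follows similarly.
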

\begin{proof}
    We prove the first result as the second follows similarly. Expanding the derivative:
    \begin{align*}
    \frac{d}{dx} \biggr \rvert_{\bar{x}^+} |f(x)|
    &= \lim_{\delta \to 0^+} \frac{|f(\bar x + \delta)| - |f(\bar x)|}{\delta} \\
    &= \lim_{\delta \to 0^+} \frac{|f(\bar x + \delta) - f(\bar x)|}{\delta} \\
    &= \left| \lim_{\delta \to 0^+} \frac{f(\bar x + \delta) - f(\bar x)}{\delta} \right| \\
    &= \Bigg| \frac{d}{dx} \biggr \rvert_{\bar{x}} f(x) \Bigg|,
    \end{align*}
    where moving the limit inside the absolute value is permissible by differentiability of $f$ at $\bar x$ and continuity of absolute value.
\end{proof}

\begin{lemma} \label{lem: deriv_swap}
    Let $f(x): \R \to M(Y)$ continuously map real numbers $x$ to a measure over the values assumed by a random variable $Y$. Then we have that
    \[
        \frac{d}{db} \biggr \rvert_{\bar b} \frac{d}{d \lambda} \left( \int_a^b f(x) dx \right)
        = \frac{d}{d \lambda} f(\bar b)
    \]
    almost everywhere over the domain of $Y$. Here $\int_a^b f(x) dx$ denotes Lebesgue integration against $\cU(a,b)$, $\frac{d}{d \lambda}$ is the Radon-Nikodym derivative with respect to the standard Lebesgue measure on $\R$, and $\frac{d}{db} \bigr \rvert_{\bar b}$ denotes the standard real analysis derivative evaluated at $\bar b$.
\end{lemma}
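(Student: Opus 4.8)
The plan is to recognize that both differentiation operations can be read as operations in the Banach space of measures on $\R$ which, via the Radon--Nikodym derivative, is identified isometrically with $L_1(\R)$; once everything lives in $L_1(\R)$, the two derivatives can be interchanged using Fubini/Tonelli together with the fundamental theorem of calculus. Throughout I would write $\phi_x \defeq \frac{d}{d\lambda} f(x)$ for the density of the measure $f(x)$ over the domain of $Y$, and interpret $\int_a^b f(x)\,dx$ as the (Bochner) integral of the measure-valued map over $[a,b]$. The hypothesis that $f$ is continuous as a measure-valued function, with continuity measured in the default norm $\Pnormi{\cdot}$, is then exactly continuity of the map $x \mapsto \phi_x$ into $L_1(\R)$, and this continuity is what drives the argument.

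First I would establish that the Radon--Nikodym derivative commutes with the $x$-integral, i.e.
\[
    \frac{d}{d\lambda}\left(\int_a^b f(x)\,dx\right) = \int_a^b \phi_x\,dx,
\]
viewed as an identity in $L_1(\R)$. Evaluating the measure on the left at an arbitrary Borel set $A$ in the domain of $Y$ gives $\int_a^b \left(\int_A \phi_x(y)\,d\lambda(y)\right)dx$; since $f(x)$ is a probability measure the integrand $\phi_x(y)\ge 0$ is nonnegative and jointly measurable in $(x,y)$ (this follows from the $L_1$-continuity, hence strong measurability, of $x\mapsto\phi_x$), so Tonelli's theorem lets me swap the order of integration and read off $y \mapsto \int_a^b \phi_x(y)\,dx$ as the density. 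Equivalently, one notes that $\mu \mapsto d\mu/d\lambda$ is a linear isometry of the space of measures onto $L_1(\R)$ (the norms coincide by the definition of $\Pnormi{\cdot}$), hence it commutes with the Bochner integral.

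Second I would differentiate in $b$ at the fixed point $\bar b$ using the fundamental theorem of calculus in $L_1(\R)$. For the difference quotient I would bound
\[
    \Pnormi{\frac{1}{\delta}\int_{\bar b}^{\bar b + \delta}\phi_x\,dx - \phi_{\bar b}} \le \frac{1}{|\delta|}\left|\int_{\bar b}^{\bar b+\delta}\Pnormi{\phi_x - \phi_{\bar b}}\,dx\right|,
\]
and since $\Pnormi{\phi_x - \phi_{\bar b}} \to 0$ as $x \to \bar b$ by continuity of $f$, the right-hand average tends to $0$ as $\delta \to 0$. Hence the $b$-derivative of $\int_a^b \phi_x\,dx$ at $\bar b$ equals $\phi_{\bar b} = \frac{d}{d\lambda}f(\bar b)$ in $L_1(\R)$, which is precisely the asserted equality of densities almost everywhere over the domain of $Y$.

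The main obstacle is making the interchange of the two derivatives rigorous, since they act on different variables and in different senses: the $\lambda$-derivative produces a density over the range of $Y$, while the $b$-derivative is a limit of difference quotients in the support parameter. Casting both inside $L_1(\R)$ resolves the conceptual issue but forces two technical checks. The first is the joint measurability of $(x,y)\mapsto\phi_x(y)$ required for Tonelli; the second is that continuity of $x\mapsto\phi_x$ upgrades the generic ``almost every $b$'' conclusion of the Lebesgue differentiation theorem to the \emph{specific} point $\bar b$ (for a.e.\ $y$), which is exactly what the $L_1$-continuity estimate above secures. Both are furnished by the stated continuity of $f$, itself guaranteed by the ambient interventional-absolute-continuity assumption; without continuity the FTC step would only hold at Lebesgue points and the pointwise-in-$\bar b$ conclusion could fail.
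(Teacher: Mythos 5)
Your proposal is correct and follows essentially the same route as the paper's proof: both reduce the claim to showing that the averaged integral $\frac{1}{\delta}\int_{\bar b}^{\bar b+\delta} f(x)\,dx$ converges to $f(\bar b)$ in the $L_1$ norm on measures, using the assumed norm-continuity of $f$. Your write-up is in fact somewhat more careful than the paper's, which asserts the final continuity step without the explicit Bochner-integral bound you supply.
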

\begin{proof}
    We can expand the definition of the outer derivative
    \begin{align*}
        \frac{d}{db} \biggr \rvert_{\bar b} \frac{d}{d \lambda} \left( \int_a^b f(x) dx \right)
        &= \lim_{b \to \bar b} \frac{1}{b - \bar b} \left(\frac{d}{d \lambda} \left( \int_a^{\bar b} f(x) dx \right) - \frac{d}{d \lambda} \left( \int_a^b f(x) dx \right) \right) \\
        &= \lim_{b \to \bar b} \frac{1}{b - \bar b} \left(\frac{d}{d \lambda} \left( \int_b^{\bar b} f(x) dx \right) \right). \\
        &= \lim_{b \to \bar b} \frac{d}{d \lambda} \left( \frac{1}{b - \bar b} \int_b^{\bar b} f(x) dx \right).
    \end{align*}
    Take an arbitrary $\epsilon > 0$. We want to show $\exists \delta > 0$ such that for all $\bar b - \delta < b < \bar b + \delta$, we have that
    \begin{align*}
        \left\|  \frac{1}{b - \bar b} \left( \int_b^{\bar b} f(x) dx \right) - f(\bar b) \right\|_1 < \epsilon,
    \end{align*}
    where the Radon-Nikodym derivative $\frac{d}{d \lambda}$ is absorbed into the $L_1$ norm definition on measures. By continuity of $f$, we can always choose a $\delta$ small enough for this inequality to hold.
\end{proof}

\vspace*{1cm}

We now prove the main theoretical results.

\end{toappendix}

All subsequent theory relies on Assumptions~\ref{ass: time_invariance}-\ref{ass: abs_cont}, and for brevity we defer proofs and auxiliary lemmas to the appendix. We now introduce the main conservativeness theorem and provide a short proof sketch.

\begin{theoremrep} \label{thm: conservative}
    In the faithful system causal model $\sys$, assume that the measure-valued function ${\hi[1] \mapsto \Pdi{v \mid \doc(\bZ = \bz), \hi[1]}}$ is continuous for any set of nodes $\bZ$ and $V \not \in \bZ$.

    Let there exist a causal edge ${\Ob[t][\oi] \cau \Ac[t'][\ai]}$ in $\Gs$ for some $t,t' \in \N$, $t' \geq t$, and indices $\oi \in [\obdim]$ and $\ai \in [\acdim]$. Then in the interventional causal model $\sysi$ where the initial state distribution $\Pdint{\st[1]}$ has everywhere-nonzero density on $\stspace$, $\Ob[][\oi]$ is almost surely not masked by Algorithm~\ref{alg: mask} for almost every uniform parameterization of $\Hi[1]$ as the number of trajectories $N \to \infty$; i.e., \eqref{eqn: mask} correctly evaluates to true.
\end{theoremrep}
\begin{customproofsketch}
By Assumptions~\ref{ass: time_invariance}~and~\ref{ass: reaction_horizon}, we can WLOG consider $t=1$ with $t' \in [H]$. If $\Ob[1][\oi] \cau \Ac[t'][\ai]$, by Assumption~\ref{ass: expert_attend} there exists an edge $\St[1][\si] \to \Ob[1][\oi]$ for some $\si$. We show that in the SCM $\Msi$ where we intervene distributionally on $\St[1]$, we have that $\St[1][\si] \nPI \Ob[1][\oi]$ and $\St[1][\si] \nPI \Ac[t'][\ai]$. The arguments are similar, so we informally sketch the proof for the former.

To show that $\St[1][\si]$ and $\Ob[1][\oi]$ are dependent, it suffices to find a particular pair of states $\St[1][\si] = \alpha, \alpha'$ which induce different probability measures $\Pd{\ob[1][\oi] \mid \doc(\St[1][\si] = \alpha)}$ (resp. $\alpha'$) over $\Ob[1][\oi]$. 
We marginalize out the random seed $w$ from our original measure of interest $\Pd{\ob[1][\oi] \mid \doc(\St[1][\si] = \alpha)}$ via the integral
\begin{align*}
\Pd{\ob[1][\oi] \mid \doc(\St[1][\si] = \alpha)} = \int_a^b \Pd{\ob[1][\oi] \mid \doc(\St[1][\si] = \alpha), \hi[1]} p(w) d \hi[1],
\end{align*}
where we model $\hi \sim \cU(a,b)$. Note that the right-hand integral above in fact yields a measure over $\ob[1][\oi]$. We now aim to show that the statement
\begin{align}
\begin{aligned}
    \exists \alpha, \alpha' \textrm{ s.t. }
    \Pnorm{\int_a^b \big[&\Pd{\ob[1][\oi] \mid \doc(\St[1][\si] = \alpha), \hi[1]} - \\
                        &\Pd{\ob[1][\oi] \mid \doc(\St[1][\si] = \alpha'), \hi[1]} \big] d \hi} > 0 \label{eq: sketch2} 
\end{aligned}
\end{align}
holds Lebesgue-almost everywhere for $(a,b) \in \R^2$. By faithfulness of $\sys$ and the path from $\St[1][\si]$ to $\Ob[1][\oi]$, do-calculus rules yield that for any random seed $\Hi=\hi$ there exist an $\alpha, \alpha'$ such that
\begin{align} \label{eq: sketch1}
    \Pnorm{\Pd{\ob[1][\oi] \mid \doc(\St[1][\si] = \alpha), \hi[1]} - \Pd{\ob[1][\oi] \mid \doc(\St[1][\si] = \alpha'), \hi[1]}} > 0.
\end{align}
We then analyze the sensitivity of \eqref{eq: sketch2} with respect to the integration bounds $a$ and $b$. Namely, for any $(\bar a, \bar b)$ where the left-hand side of \eqref{eq: sketch2} vanishes, \eqref{eq: sketch1} yields that there exists an open ball around $\bar b$ in which \eqref{eq: sketch2} holds everywhere except $(\bar a, \bar b)$. An argument from Fubini's theorem then shows that \eqref{eq: sketch2} holds for almost all $(a,b)$. Appealing to the consistency of Hoeffding's independence test concludes the proof.
\end{customproofsketch}

\begin{appendixproof}
    By Assumptions~\ref{ass: time_invariance}~and~\ref{ass: reaction_horizon}, we can WLOG consider $t=1$ with $t' \in [H]$. If $\Ob[1][\oi] \cau \Ac[t'][\ai]$, by Assumption~\ref{ass: expert_attend} there exists an edge $\St[1][\si] \to \Ob[1][\oi]$ for some $\si$. We now want to show that in the SCM $\Msi$ where we intervene distributionally on $\St[1]$, we have that $\St[1][\si] \nPI \Ob[1][\oi]$ and $\St[1][\si] \nPI \Ac[t'][\ai]$. The arguments are similar, so we will just state the proof for the former.

    We want to show that $\St[1][\si]$ and $\Ob[1][\oi]$ are not independent in $\Msi$. Note that in the modified structural assignment for $\St[1][\si]$ in $\Msi$, $\St[1][\si]$ is distributed with everywhere-nonzero density on $\stspace$. Therefore checking the desired independence is equivalent to showing
    \begin{align} \label{eqn: conservative_inequality_1}
        \Pnorm{\Pd{\ob[1][\oi] \mid \doc(\St[1][\si] = \alpha)} - \Pd{\ob[1][\oi] \mid \doc(\St[1][\si] = \alpha')}} > 0
    \end{align}
    for some $\alpha, \alpha' \in \R$ with $\alpha \neq \alpha'$. Here, $\Pd{\ob[1][\oi] \mid \cdot}$ denotes a probability measure over $\ob[1][\oi]$. The $\doc$ statement captures our ability to intervene on the initial state, decoupling any potential correlational influence from $\Hi[1]$.

    By Lemma~\ref{lem: faithful}, we have that for any particular value $\hi[1]$ of $\Hi[1]$,
    \begin{align*}
        \Pnorm{\Pd{\ob[1][\oi] \mid \doc(\St[1][\si] = \alpha), \hi[1]} - \Pd{\ob[1][\oi] \mid \doc(\St[1][\si] = \alpha'), \hi[1]}} > 0,
    \end{align*}
    for some $\alpha$, $\alpha'$. This is equivalent to
    \begin{align}
         \Pnormi{h(\alpha, \alpha', \hi[1])} \not\equiv \zf \quad \forall \hi[1],  \label{eqn: h_nonzero}
    \end{align}
    where we define
    \begin{align*}
        h(\alpha, \alpha', \hi[1]) \defeq \Pd{\ob[1][\oi] \mid \doc(\St[1][\si] = \alpha), \hi[1]} - \Pd{\ob[1][\oi] \mid \doc(\St[1][\si] = \alpha'), \hi[1]},
    \end{align*}
    and $\zf$ denotes an identically zero function over $\alpha, \alpha'$. Note that $h(\alpha, \alpha', \hi[1])$ specifies a signed measure over $\ob[1][\oi]$. Now observe that
    \begin{align*}
    \Pd{\ob[1][\oi] \mid \doc(\St[1][\si] = \alpha)} = \int \Pd{\ob[1][\oi] \mid \doc(\St[1][\si] = \alpha), \hi[1]} p(\hi[1]) d\mu(\hi[1]),
    \end{align*}
	where $\mu$ is a probability measure on the unobserved variable $\hi[1]$ which we will instantiate shortly, and $p(\hi[1])$ denotes the probability density of $\Hi[1]$, i.e. the Radon-Nikodym derivative of the measure $\Pdi{\hi[1]}$. Note that the result of this integral is still a signed measure over $\ob[1][\oi]$. So we have that showing our desired inequality \eqref{eqn: conservative_inequality_1} is equivalent to showing
    \begin{align*}
        \Pnorm{\int h(\alpha, \alpha', \hi[1]) p(\hi[1]) d\mu(\hi[1])} \not\equiv \zf
    \end{align*}
    as a function of $\alpha$, $\alpha'$ for ``almost all'' measures $\mu$---as there is no natural measure on the space of measures, we have formalized this assuming a uniform distribution $\Hi[1] \sim \cU(a,b)$. Note that the outer norm computes the $L_1$ norm of a signed measure over $\ob[1][\oi]$. For notational convenience, we will now define the concatenation $z = [\alpha, \alpha']$, with $z \in \R^2$. We can now concretely refine $\mu$ in the above statement, using our new $z$-notation, to showing that
    \begin{align}
        g_{a}^b(z) \defeq \Pnorm{\int_a^b h(z, \hi[1]) d\hi[1]} \not\equiv \zf \label{eqn: conservative_inequality_2} 
    \end{align}
    as a function of $z$ for almost every $(a,b)$; i.e., the subset of $(a,b)$ parameter space where $g_a^b(z) \equiv \zf$ over $z$ is measure zero with respect to the standard Lebesgue measure in $\mathbb{R}^2$. Note that we drop the $p(\hi[1])$ factor since for the uniform distribution this is a constant which factors out.

    This can be analyzed by taking sections were we fix $a$ and consider the set of $b$'s where $g_a^b (z) \equiv \zf$; if this set has measure zero, then the overall set of Cartesian pairs $(a,b)$ where $g_a^b(z) \equiv \zf$ can be shown to have measure zero by the following argument. Observe that $g_a^b(z)$ is continuous in $a, b$ and $z$ by Assumption~\ref{ass: abs_cont} and integral properties; then the inverse image of $\{0\}$ under $g$ is a Borel subset of $A \subset \R^4$, recalling that $z \in \R^2$. The projection of this Borel subset on to the $(a,b)$ plane is measurable (but not necessarily Borel). Then if each fixed-$a$ slice is measure zero, the overall set is measure zero by Fubini's theorem.

    Correspondingly, we fix any particular $a$ and drop it from the subscript of $g_a^b$ for simplicity. Consider a particular $\bar b$ where $g^{\bar b}(z) \equiv \zf$ as a function of $z$. We expand the $L_1$ norm in \eqref{eqn: conservative_inequality_2} as
    \begin{align}
        g^b(z) = \int \biggl \lvert \frac{d}{d\lambda} \bigg( \int_a^b h(z, \hi[1]) d\hi[1] \bigg) \biggr \rvert d \lambda
        = \int \left| f^b_z(\ob[1][\oi]) \right| d \lambda, \label{eqn: deriv_expansion}
    \end{align}
    using the interventional absolute continuity assumption to invoke the Radon-Nikodym derivative on our signed measure over $\ob[1][\oi]$ with respect to the standard Lebesgue measure $\lambda$. We denote the resulting density function by $f^b_z(\ob[1][\oi])$. Note that since $g^{\bar b}(z) \equiv \zf$, we have that $f^{\bar b}_z(\ob[1][\oi]) = 0$ for almost all $z$ and $\ob[1][\oi]$.

    Note that $f^b_z(\ob[1][\oi])$ is differentiable with respect to $b$ due to the assumed continuity of maps on $\hi[1]$ in the theorem statement. We now differentiate \eqref{eqn: deriv_expansion} with respect to $b$ at $\bar b$. Due to the absolute value in \eqref{eqn: deriv_expansion}, we must take care to differentiate from above and below and show both these cases are nonzero. As they follow similarly, we show the case for above:
    \begin{align}
		\frac{d}{db} \biggr \rvert_{\bar{b}^+} g^b(z) &= \frac{d}{db} \biggr \rvert_{\bar b^+} \int \biggl \lvert \frac{d}{d\lambda} \bigg( \int_a^b h(z, \hi[1]) d\hi[1] \bigg) \biggr \rvert d \lambda \label{eqn: step1} \\
 		&= \int \frac{d}{db} \biggr \rvert_{\bar b^+} \biggl \lvert \frac{d}{d\lambda} \bigg( \int_a^b h(z, \hi[1]) d\hi[1] \bigg) \biggr \rvert d \lambda \label{eqn: step2}  \\
 		&= \int \biggl \lvert \frac{d}{db} \biggr \rvert_{\bar b} \frac{d}{d\lambda} \bigg( \int_a^b h(z, \hi[1]) d\hi[1] \bigg) \biggr \rvert d \lambda \label{eqn: step3}  \\
 		&= \int \biggl \lvert \frac{d}{d\lambda} h(z, \bar b) \biggr \rvert d \lambda \label{eqn: step4}  \\
 		&= \Pnorm{h(z, \bar b)} \label{eqn: step5}  \\
 		&\not\equiv \zf, \qquad\qquad \text{(as a function of $z$)} \label{eqn: step6}
    \end{align}
    where \eqref{eqn: step2} follows from boundedness of the Radon-Nikodym derivative of $h(z, \bar b)$, \eqref{eqn: step3} follows from applying Lemma~\ref{lem: deriv_abs_value_swap} to $f^b_z(\ob[1][\oi])$ with respect to $b$, \eqref{eqn: step4} follows from Lemma~\ref{lem: deriv_swap}, \eqref{eqn: step5} follows from differentiability of $f_z^b(\ob[1][\oi])$ with respect to $b$, and \eqref{eqn: step6} follows from \eqref{eqn: h_nonzero}.

    Proceeding similarly, we can show that both
    \begin{align*}
    \frac{d}{db} \biggr \rvert_{\bar{b}^+} g^b(z) \not \equiv \zf \quad \mathrm{and} \quad
    \frac{d}{db} \biggr \rvert_{\bar{b}^-} g^b(z) \not \equiv \zf.
    \end{align*}
    It is then immediate that there exists a ball $B(\bar b, \epsilon_{\bar b})$ such that $g^b(z) \not \equiv \zf$ for all $b \in B(\bar b, \epsilon_{\bar b}) \setminus \bar b$. Applying Lemma~\ref{lem: measure_zero} concludes that for a fixed $a$, the set of $b$ for which \eqref{eqn: conservative_inequality_2} is violated is measure zero. Hence by the above Fubini argument, for almost every uniform measure $\mathcal{U}(a,b)$ on $\hi[1]$, we have that \eqref{eqn: conservative_inequality_1} holds for some $\alpha, \alpha'$. Therefore $\St[1][\si] \nPI \Ob[1][\oi]$ in the interventional distribution on $\St[1][\si]$.

    A similar argument shows that $\St[1][\si] \nPI \Ac[t'][\ai]$. By absolute continuity of the induced interventional distributions, we now have that Hoeffding's independence test is consistent, and hence the dependences are detected with probability $1$ as $N \to \infty$. Therefore $\Ob[1][\oi] \pcau \Ac[t'][\ai]$, and $\maski_{\oi}$ evaluates to false \eqref{eqn: mask} as $N \to \infty$. 
\end{appendixproof}

Theorem~\ref{thm: conservative} guarantees that Algorithm~\ref{alg: mask} maintains conservativeness by correctly preserving unmasked observations that causally impact expert actions. This outcome is consistent with the discussion in Section~\ref{sec: derivation}, where we observed that the faithfulness of $\sys$ ensures the correctness of the algorithm when we do not intervene on $\St[1]$ and allow the initial state to be naturally generated from $\Hi[1]$. Theorem~\ref{thm: conservative} establishes that this property also holds in the interventional system $\sysi$, where we assign $\St[1] \sim \Pdint{\st[1]}$.

We now theoretically demonstrate the benefits of intervening with $\Pdint{\st[1]}$. Specifically, we show that this intervention reduces the excess conservatism in the masking algorithm by removing income edges from $\Hi[1]$ in the causal graph, thereby eliminating a potential avenue of confounding.

\begin{theoremrep} \label{thm: no_more_conservative}
    Let $\mask$ denote the potential-cause test evaluated by Algorithm~\ref{alg: mask} on the distribution induced by the non-interventional system $\sys$, and let $\maski$ be the original test on the interventional system $\sysi$ where $\Pdint{\st[1]}$ has everywhere-nonzero density on $\stspace$. If $\mask_{\oi}$ correctly evaluates to true for a particular $\oi \in [\obdim]$, then $\maski_{\oi}$ also evaluates to true almost surely as the number of trajectories $N \to \infty$.
\end{theoremrep}
\begin{appendixproof}
    If $\mask_{\oi}$ evaluates to true, then for any $\si \in [\stdim]$, $\ai \in [\acdim]$, and $t' \in [H]$, we have that either $\St[1][\si] \PI_{\Ms} \Ob[1][\oi]$ or $\St[1][\si] \PI_{\Ms} \Ac[\ai][t']$, where $\PI_{\Ms}$ denotes independence in the distribution induced by the non-interventional SCM $\Ms$. It suffices to show that both these independencies hold in the distribution induced by $\Msi$. As both arguments follow similarly, we consider showing that $\St[1][\si] \PI_{\Msi} \Ob[1][\oi]$.

    As we are given $\St[1][\si] \PI_{\Ms} \Ob[1][\oi]$, it is immediate by faithfulness that there exists no collider-free path from $\St[1][\si]$ to $\Ob[1][\oi]$ in $\Gs$. Since $\Gsi$ is simply $\Gs$ with the incoming edges to $\St[1]$ removed, it holds that there is no collider-free path between $\St[1][\si]$ and $\Ob[1][\oi]$ in $\Gsi$. Therefore $\St[1][\si] \PI_{\Msi} \Ob[1][\oi]$, and as $N \to \infty$ this is correctly detected with probability $1$ by the consistency of Hoeffding's test.
\end{appendixproof}

Theorem~\ref{thm: no_more_conservative} assures us that intervening with $\Pdint{\st[1]}$ does not lead to more conservative masking than the original system. We now provide a specific class of SCMs for which the intervention strictly improves the mask.

\begin{propositionrep} \label{prop: strictly_less_conservative}
    Let $\maski$ and $\mask$ be as in Theorem~\ref{thm: no_more_conservative}, and consider a particular observation index $\oi \in [\obdim]$ such that the only incoming edge to $\Ob[1][\oi]$ is $\Hi[1] \cau \Ob[1][\oi]$. Then if in $\Gs$ there exists the fork $\St[1][\si] \leftarrow \Hi[1] \cau \Ob[1][\oi]$ for some $\si \in [\stdim]$ and a directed path from $\St[1][\si]$ to some $\Ac[t][\ai]$, with $t \in [H], \ai \in [\acdim]$, $\maski_{\oi}$ correctly masks the $\oi \tth$ observation almost surely as the number of trajectories $N \to \infty$ while $\mask_{\oi}$ does not.
\end{propositionrep}

\begin{appendixproof}
	We first show that $\mask$ does not mask $\oi$ and take all causal and probabilistic statements to refer to the unintervened causal model $\sys$. By faithfulness, the fork ${\St[1][\si] \leftarrow \Hi[1] \rightarrow \Ob[1][\oi]}$ in $\Gs$ produces a statistical dependence $\St[1][\si] \,{\nPI_{\Ms}}\, \Ob[1][\oi]$ in the probability distribution induced by $\Ms$. Similarly, the directed path from $\St[1][\si]$ to $\Ac[t][\ai]$ yields ${\St[1][\si]\, {\nPI_{\Ms}}\, \Ac[t][\ai]}$. By consistency of Hoeffding's test, as ${N \to \infty}$ we get that ${^{(1,t)} D_{\si,\ai}^{\oi}}$ evaluates to true almost surely \eqref{eqn: causal_check} and thus ${\Ob[1][\oi] \pcau \Ac[t][\ai]}$ by \eqref{eqn: potential_cause}. Therefore $\mask_{\oi}$ is not masked \eqref{eqn: mask}.

    We now show that $\maski$ does mask $\oi$ and take all causal and probabilistic statements to refer to the \emph{intervened} causal model $\sysi$. Since $\Hi[1]$ only has outgoing edges, and the edge from $\Hi[1] \to \St[1][\si']$ is removed in $\Gsi$ for every $\si' \in [\stdim]$, there exists no path from $\St[1][\si']$ to $\Ob[1][\oi]$ in $\Gsi$, and therefore $\St[1][\si'] \PI_{\Msi} \Ob[1][\oi]$ in the probability distribution induced by $\Msi$. As $N \to \infty$ this independence is detected by Hoeffding's test, and since $\si'$ was arbitrary ${^{(1,t')} D_{\si',\ai'}^{\oi}}$ is false for every $\si' \in [\stdim]$, $\ai' \in [\acdim]$, and $t' \in [H]$. Therefore $\Ob[1][\oi] \npcau \Ac[t'][\ai']$ for any $\ai' \in [\acdim],t' \in [H]$, and \eqref{eqn: mask} evaluates to true. Therefore $\maski_{\oi}$ is masked.
\end{appendixproof}


\section{EXPERIMENTS} \label{sec: experiments}
\begin{figure*}[t] 
    \centering
    \input{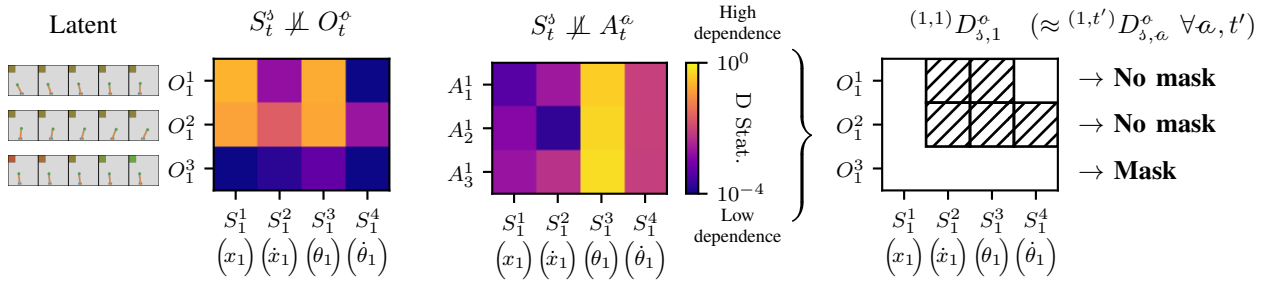}
    \caption{Masking algorithm visualization for the \cp environment with reaction horizon $H=3$. Latent space interpolation of the $\beta$-VAE reveals that $\Ob[][1]$ and $\Ob[][2]$ capture some combined positional/angular information, while $\Ob[][3]$ captures the disentangled confounder (color of the confounding square). This last observation shares virtually no dependence (Hoeffding's D statistic less than $\gamma=10^{-3}$) with any state variable due to interventions on $\St[1]$ (note the log scale). This means that ${^{(1,t')} D_{\si,\ai}^{\oi}}$ is false (no cross hatches) for $\oi=3$ and all $\si \in [\stdim]$, regardless of $\ai$ and $t'$; i.e. $\Ob[1][3] \npcau \Ac[t'][\ai]$ for all $\ai \in [\acdim]$ and $t' \in [H]$, and we can mask the confounder $\Ob[][3]$. \vspace*{-0.27cm}}
    \label{fig: dependencies}
\end{figure*}

We evaluate our approach on two custom simulated environments: \cp and \re. Each of these environments contains a nuisance feature which is likely to induce causal confusion. Our masking approach can successfully eliminate these spuriously correlated features. Precise experimental details are deferred to Appendix~\ref{app: experiments}.

\subsection{Environments} \label{sec: env}

Both considered environments are modified to include a nuisance feature corresponding to the previous action taken by the expert (analogous to the brake light example). For each environment, the expert is a standard constrained finite-time optimal control policy which minimizes cumulative trajectory loss. This expert reward function is not provided to the imitation learning agent.

\textbf{\cp}. This environment consists of a standard planar cart-pole system with a continuous scalar horizontal force applied to the cart. A quadratic cost is imposed for deviations from the vertical target state. The spuriously correlated feature is a colored square in the upper-left corner of each image, which interpolates between green and red depending on the most recently executed action.

\textbf{\re}. We consider a top-down version of a two-dimensional two-joint \re environment \cite{OpenAIGym}. The environment penalizes squared distance of the end effector to a black target dot. The target location is included in the state vector, thus satisfying Assumption~\ref{ass: expert_attend}. Two torques, one per joint, are specified as the control inputs; the nuisance feature is a red dot in the upper-left corner whose horizontal position and vertical position encode the two control inputs from the previous time step. This ``joystick'' introduces a different kind of nuisance feature than in the \cp environment.

\subsection{Discussion}

We compare the performance of our masked policy against vanilla behavior cloning. The baseline behavior cloning policy is denoted by \bc, and our masked policy is denoted by \masked. For reference, we also measure the performance of the behavior cloning policy with the confounding signals manually removed by superimposing a white square on the upper-left corner, denoted \bcd. We emphasize that \bcd requires human judgement to manually eliminate spurious confounders; we show that we can approach this performance in a principled and automated way.

Figure~\ref{fig: results} displays our experimental results. For \cp, the policies were not able to consistently stabilize the pendulum at the beginning of training, leading to high loss variance. Across both environments, the \masked policy substantially outperforms the vanilla behavior cloning policy $\bc$. It is worth noting that \masked approaches the manually deconfounded baseline's performance without requiring expert queries, access to the expert reward function, or pre-specified information on the causal graph in the deconfounding procedure. However, there is a gap between the performance of our method and manual masking for the \re environment. This is likely attributable to imperfect disentanglement in the $\beta$-VAE, and we expect that our approach could benefit substantially from future research in disentangled representation learning.

Figure~\ref{fig: dependencies} provides a visualization of our masking procedure and the resulting mask for the \cp environment. Note that our algorithm masks the third observation $\Ob[][3]$, corresponding precisely to the manually masked confounding square. While we use a latent space size of three (the precise number of independent factors of variation) for visualization purposes, our masking procedure is fully functional for larger choices of the latent size. For \re, although there are $6$ factors of variation in each image, a larger latent space of size $12$ yielded superior disentanglement and reconstruction performance.

\begin{figure*}[t]
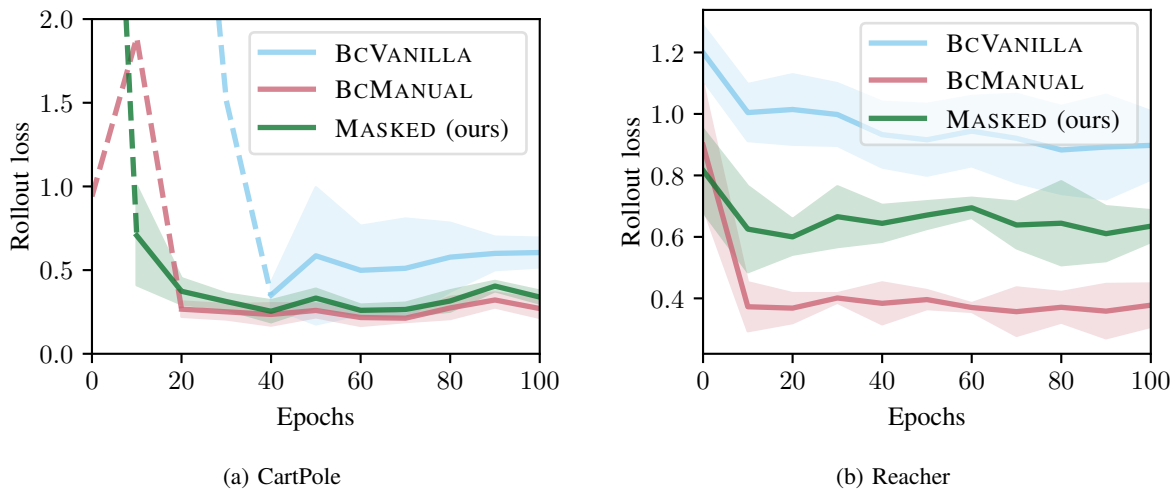

     \centering
     \begin{subfigure}[b]{0.45\textwidth}
         \centering
         \includegraphics{figs/cartpole_val_rollouts.pgf}
         \caption{\cp}
         \label{fig: results_cartpole}
     \end{subfigure}
     \begin{subfigure}[b]{0.45\textwidth}
         \centering
         \includegraphics{figs/reacher_val_rollouts.pgf}
         \caption{\re}
         \label{fig: results_reacher}
     \end{subfigure}
     \caption{Evaluation rollout loss on \cp (\subref{fig: results_cartpole}) and \re (\subref{fig: results_reacher}) across training epochs. Lines denote mean performance over $5$ runs while shaded areas indicate standard deviation. To limit visual clutter, for standard deviations greater than $1$ shading is omitted and the mean is drawn with a dashed line. Our \masked policy approaches the performance of the manually-deconfounded \bcd baseline, while \bc struggles due to causally confusing features. \vspace*{-0.27cm}}
        \label{fig: results}
\end{figure*}

The most significant limitation of our work, besides the explicitly stated assumptions, is the requirement that confounding factors are observable and can be neatly disentangled. While this holds for the environments considered in this work, more complex environments may introduce entanglement between causally confusing features and important features to which the expert policy actually attends. We introduce the Hoeffding threshold hyperparameter $\gamma$ to mitigate this concern; however, investigating more principled methods for handling incomplete disentanglement would be an exciting area of future work.

\section{CONCLUSION}
This work introduces a novel method to address the causal confusion problem in imitation learning. The proposed method leverages the typical imitation learning ability to intervene in the initial system state. Unlike previous works, our method masks causally confusing observations without relying on online expert queries, knowledge of the expert reward function, or specification of the causal graph. Our theoretical results establish that our masking algorithm is \emph{conservative}, with excess conservatism strictly reduced by interventions on the initial state. We illustrate the effectiveness of our method with experiments on \cp and \re.

\printbibliography

\AtEndDocument{\section{Experiments} \label{app: experiments}

We include here essential environment, architecture, and hyperparameter details.

\subsection{Environments} \label{app: environments}

We consider two environments: \cp and \re. Both systems are rendered to $64 \times 64$ RGB images, pictured in Figure~\ref{fig: environments}.

\begin{figure*}[h]
     \centering
     \begin{subfigure}[t]{0.4\textwidth}
         \centering
         \includegraphics[width=\textwidth]{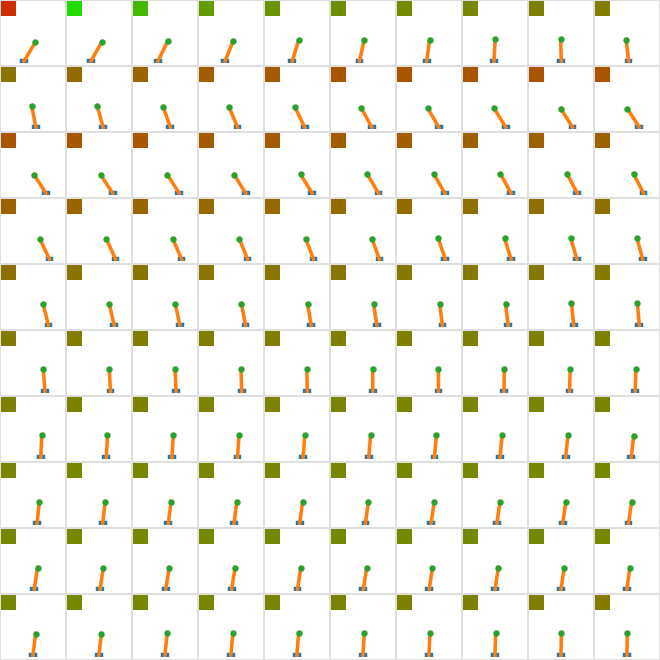}
         \caption{\cp}
         \label{fig: run_cartpole}
     \end{subfigure}
     \hspace*{1cm}
     \begin{subfigure}[t]{0.4\textwidth}
         \centering
         \includegraphics[width=\textwidth]{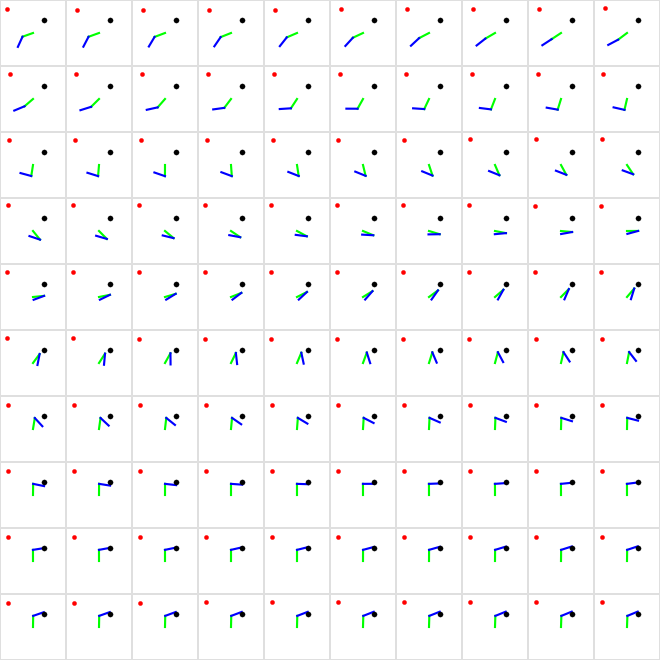}
         \caption{\re}
         \label{fig: run_reacher}
     \end{subfigure}
     \caption{Illustrative rollouts of the expert policy on \cp (\subref{fig: run_cartpole}) and \re (\subref{fig: run_reacher}). The \re run is truncated to $100$ time steps for visualization purposes. Note the nuisance feature in the upper left hand corner of the images.}
        \label{fig: environments}
\end{figure*}

\subsubsection{\cp}
The \cp environment \cite{barto1983neuronlike} describes a nonlinear dynamic system consisting of four states: the cart position $x$, the cart velocity $\dot{x}$, the pole angle $\theta$, and the pole angular velocity $\dot{\theta}$. The state vector at time $t$ is $S_t \coloneqq (\xt, \dxt, \tht, \dtht)$. The agent action is a continuous horizontal force acting on the cart, bounded symmetrically in the range $[-25, 25]$. The length of the pole is $1$ meter, with the masses of the cart and the pole set to $1$ and $0.1$ kilograms, respectively. We specify the gravitational acceleration constant as $g = 9.8 \textrm{m/s}^2$. The system is then discretized with $\Delta t = 0.05$ for $100$ time steps using the forward Euler method, with the standard CartPole dynamics equations adapted from OpenAI Gym \cite{OpenAIGym}.
We minimize cumulative stepwise quadratic form loss to the upright target state $\Starget = (0, 0, 0, 0)$, with an additional quadratic control cost.

To each frame, we add a $15 \times 15$ square nuisance feature at the top-left corner of each image. The color of the square interpolates linearly between green and red, depending on the action (cart force) from the previous time step. At the initial time step $t = 1$ there is no previous action, and thus we use a random number drawn from $\Unif(-25, 25)$ to generate the square.

We generate $5,000$ random-policy trajectories for training the $\beta$-VAE and $1,000$ expert trajectories for imitation learning. Random-policy trajectories are terminated when the states become out-of-bound, and are thus generally significantly shorter than the expert trajectories.

\subsubsection{\re}
\re is also implemented based on the classic OpenAI Gym environment \cite{OpenAIGym}. The system contains six states: target position $x^*, y^*$; joint one angle and velocity $\theta_1, \dot \theta_1$; and joint two angle and velocity $\theta_2, \dot \theta_2$. The target positions $x^*, y^*$ is fixed over the course of one trajectory to a random point in the reachable area. Both links have mass $1$ kilogram and length $0.5$ meters. Agents specify torques at both joints, bounded in the range $[-2, 2]$. The objective penalizes squared distance of the end effector from $(x^*, y^*)$---visualized as a black dot---at each time step, along with a quadratic control cost. We simulate with a time step $\Delta t=0.05$ seconds for $200$ time steps.

For \re, we demonstrate that our method can eliminate a visually different type of confounding than the colored square in the \cp experiment. The considered confounder is a red dot in the upper-left corner of the image that moves translationally according to the agent action in the previous time step. Specifically, the horizontal position is linearly interpolated according to the first joint torque, and the vertical position is linearly interpolated according to the second joint torque. Similarly to \cp, we choose a random previous action for the first time step.

We generate $2,000$ random-policy trajectories for training the $\beta$-VAE and $2,000$ expert trajectories for imitation learning.

\subsection{VAE training}

We train a standard $\beta$-VAE \cite{burgess2018understanding} as implemented by \cite{Subramanian2020}. We choose a latent space dimension of $3$ for \cp and $12$ for \re, although we note that larger choices for the latent space dimension yield similar results. We train for $100$ epochs at a learning rate of $0.0005$ on \re and $0.005$ on \cp. Both use an exponential learning rate scheduler with decay factor $0.95$.  Our batch size is $256$ for \re and $64$ for \cp. Finally, we choose the disentaglement factor $\beta=100$ for \cp and $\beta=1000$ for \re.

\subsection{Behavior cloning training}

For all enviroments we use a standard pre-activation ResNet-18 \cite{he2016deep}. We train with the Adam optimizer \cite{kingma2014adam} at an initial learning rate of $0.001$ and exponential learning rate decay with factor $0.96$. We use a batch size of $256$ and evaluate the performance of the agent with $25$ validation rollouts every $10$ epochs. As a single image of the environment cannot convey higher-order state information such as velocity, we input the previous two images into our policies---i.e., $L=2$ in \eqref{eqn: policy}. Thus, we make the necessary architectural change to the underlying models of setting the number of input channels to $6$. Since the first time step does not have an associated previous image, we use a blank image as a surrogate.
}

\end{document}